\documentclass[letterpaper]{article} %
\usepackage{aaai2027}  %
\usepackage[hyphens]{url}  %
\usepackage{graphicx} %
\usepackage{natbib}  %
\usepackage{caption} %
\usepackage{algorithm}
\usepackage{algorithmic}

\usepackage{booktabs}

\usepackage{amsmath}
\usepackage{amssymb}
\usepackage{amsthm}
\usepackage{multirow}

\newtheorem{theorem}{Theorem}
\newtheorem{lemma}{Lemma}

\newcommand{\std}[1]{{\small$\pm$#1}}

\title{HyperFix: Combinatorial Nonlinear Correction for Task Vector Merging}
\author{
    Hyo Seo Kim,
    Ren Wang\corresponding
}
\affiliations{
    Illinois Institute of Technology\\
    hkim155@hawk.illinoistech.edu, rwang74@illinoistech.edu
}

\nocopyright

\begin{document}

\maketitle

\begin{abstract}
    Task vectors enable model merging without joint retraining. In practice, the subset of task vectors to be merged may vary, but many existing methods use scalar tuning for a particular subset, requiring repeated tuning across subsets and restricting task vector merging to linear rescaling. We therefore formulate merging across varying task subsets as a \emph{combinatorial correction} problem and introduce HyperFix, a lightweight hypernetwork that predicts subset-conditioned nonlinear corrections in weight space. Trained once on singleton, pair, and triple subsets from a task bank, HyperFix generalizes to larger subsets without per-subset optimization. Our local perturbation analysis bounds the residual correction beyond linear merging and motivates learning it from small task updates. Experiments across diverse benchmarks show that HyperFix outperforms existing task vector merging methods while reducing tuning cost.
\end{abstract}

\section{Introduction}
\label{sec:intro}

Task vectors represent the parameter differences between pretrained and task-specific fine-tuned weights. Task arithmetic~\cite{ilharcoediting} linearly combines these task vectors to integrate task-specific capabilities without expensive joint retraining. This approach has demonstrated strong performance across vision~\cite{ortiz2023task}, language~\cite{zhang2025variational, zeng2025robustmerge}, and multimodal domains~\cite{huang2024multimodal}.

Despite these advances, many existing task vector merging methods are developed and evaluated for a single fixed task subset, with scalar tuning for that subset. However, different downstream needs may require different subsets of task-specific capabilities, highlighting the need for merging methods that can accommodate varying target subsets. In particular, the tuned scalar cannot be reliably reused because it is specific to a particular subset, as illustrated in Fig.~\ref{fig:intro_motivation}(a). As diverse subsets need to be considered, repeated per-subset tuning is required, resulting in a combinatorial tuning cost. Moreover, these merging methods are restricted to linear scaling of the merged task vector, limiting their ability to capture nonlinear interactions among tasks.

\begin{figure}[t]
\centering
\includegraphics[width=\linewidth]{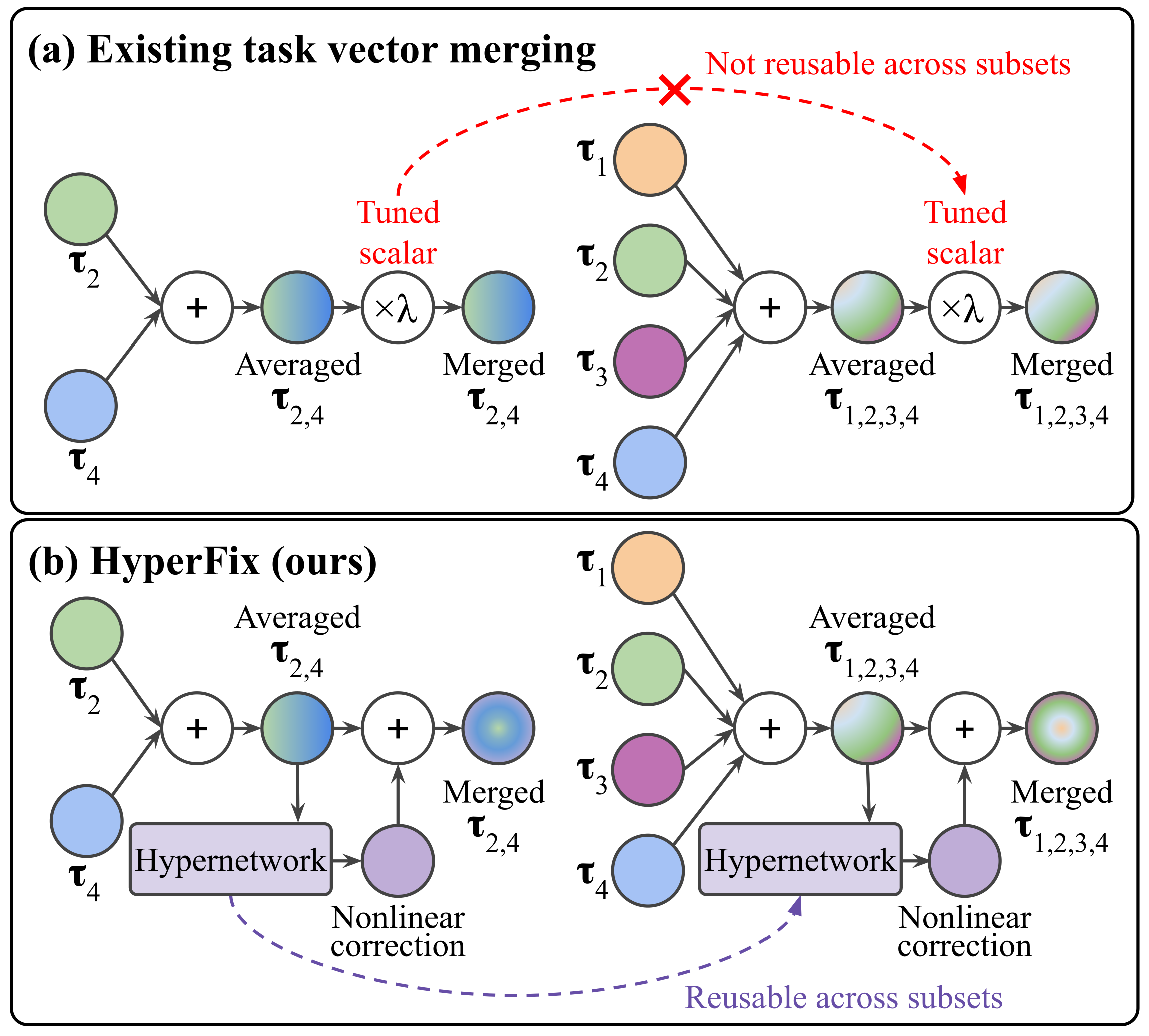}
\caption{
Comparison of the existing merging approach and HyperFix. (a) Existing merging requires separate scalar tuning for each task subset and can only rescale the linearly merged task vector. (b) HyperFix uses a hypernetwork to predict subset-conditioned correction, avoiding per-subset tuning and enabling nonlinear correction.
}
\label{fig:intro_motivation}
\end{figure} 

We therefore view task vector merging as a \emph{combinatorial correction} problem, where the goal is to develop a shared mechanism that constructs a subset-conditioned merged model for any selected subset of task vectors, rather than optimizing a model for a single fixed subset. To address this problem, we introduce HyperFix, a framework that predicts subset-conditioned nonlinear corrections, as illustrated in Fig.~\ref{fig:intro_motivation}(b). Specifically, HyperFix uses a lightweight hypernetwork, trained on small subsets and applied to larger subsets, to generate the corrections.

Our theoretical analysis explains why corrections learned from small subsets can generalize to larger ones. Under mild local smoothness conditions, nonlinear interaction effects can be approximated using low-order subsets. Consistent with this analysis, our experimental results demonstrate that HyperFix consistently outperforms merging baselines across all subset sizes on eight image classification benchmarks and three CLIP~\cite{radford2021learning} architectures. By avoiding repeated per-subset scalar tuning, HyperFix reduces tuning cost by up to 82.6\%.

Our main contributions are as follows:

\begin{itemize}

\item We view task vector merging across varying subsets as a \emph{combinatorial correction} problem and introduce HyperFix, a hypernetwork-based framework that learns a shared mapping from task-subset embeddings to weight corrections that capture nonlinear task interactions.

\item We theoretically show that nonlinear interactions can be approximated using low-order subsets, explaining generalization from small to larger subsets. 

\item Across diverse benchmarks, HyperFix outperforms merging baselines across all subset sizes while reducing tuning cost dramatically.

\end{itemize}

\section{Related Work}

\subsection{Model Merging} 
Model merging aims to combine multiple models fine-tuned from a shared foundation model into a single model that preserves their task-specific capabilities. A fundamental approach is Task Arithmetic~\cite{ilharcoediting}, which represents task-specific adaptations as task vectors and merges them through linear addition. Subsequent methods improve task vector merging through masking and magnitude-based selection~\cite{yadav2023ties, yu2024language, kim2025negmerge}, geometry-aware constraints~\cite{sun2025task, porrello2026dataless}, variational formulations~\cite{zhang2025variational}, and low-rank decomposition of task matrices~\cite{gargiulo2025task}. Adaptive coefficient-based methods such as AdaMerging~\cite{yang2024adamerging} optimize linear merging coefficients for a given set of target tasks. In PEFT settings, alignment-based merging of LoRA adapters has also been explored~\cite{panariello2025accurate}. Despite these advances, most existing methods construct a merged model separately for each target task subset and do not learn a shared merging function across subsets. In contrast, HyperFix learns a shared merging function that predicts subset-conditioned nonlinear corrections.

\subsection{Hypernetwork-based Model Editing} Hypernetworks~\cite{ha2017hypernetworks, krueger2017bayesian, pmlr-v97-ratzlaff19a} are secondary neural networks that generate or modulate the parameters of a target network. In model editing, hypernetworks produce parameter updates conditioned on editing requests. MeG~\cite{wan2025massive} uses a diffusion-based hypernetwork to generate dynamic weight neurons, enabling large-scale knowledge updates without directly modifying internal model weights, unlike approaches such as MEMIT~\cite{mengmass} and MALMEN~\cite{tanmassive}. LoRA.rar~\cite{shenaj2025lora} uses a hypernetwork to predict merging coefficients for combining subject and style LoRAs in real time, reducing computation compared with optimization-based methods. Unlike these hypernetwork-based approaches, HyperFix neither generates full model weights nor predicts scalar merging coefficients. Instead, it predicts a structured low-rank nonlinear correction on top of the linearly merged task vector, conditioned on a permutation-invariant representation of the selected task subset.

\section{Method}
\label{sec:method}

\begin{figure*}[tb]
  \centering  \includegraphics[width=\linewidth]{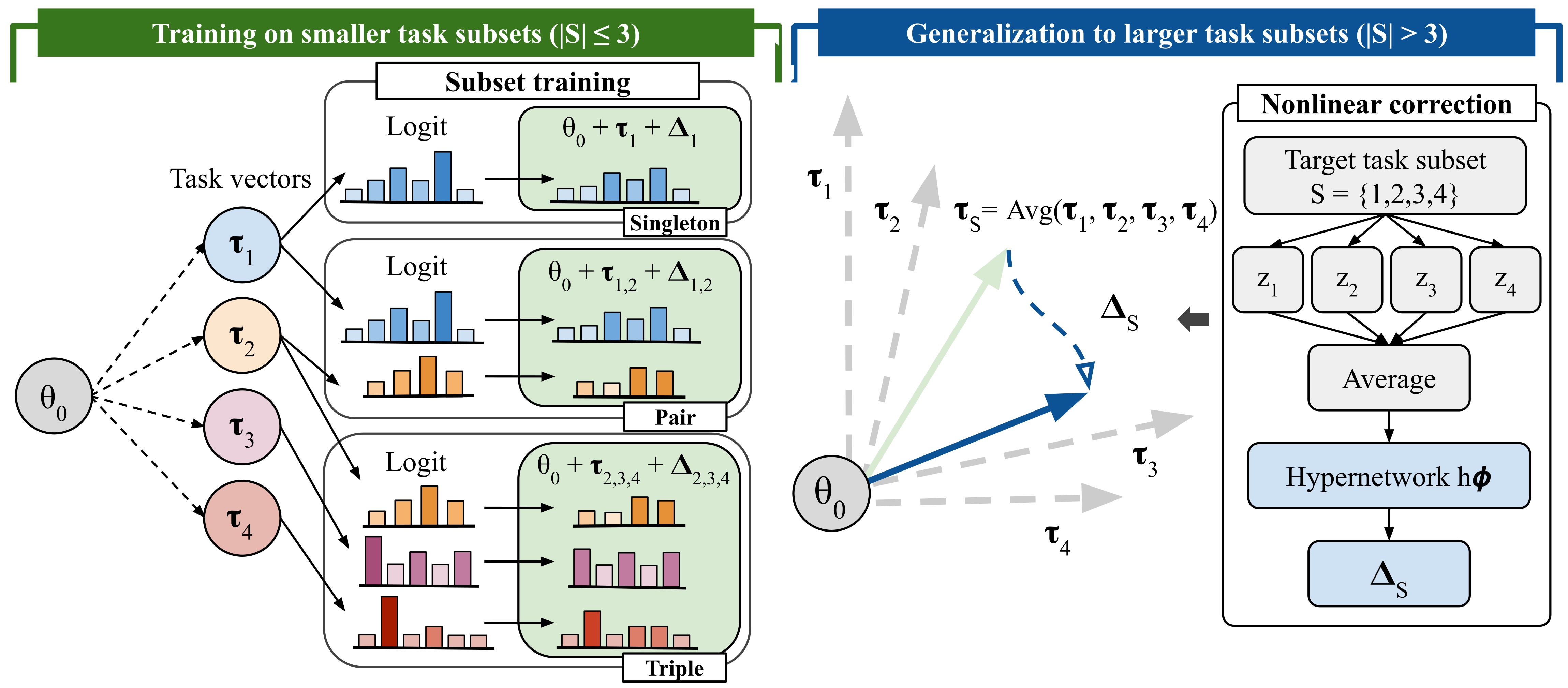}
  \caption{Overview of HyperFix. Given a target task subset $S$, HyperFix constructs the merged task vector $\tau_S$ and uses the subset embedding $z_S$ to predict a subset-conditioned correction $\Delta_S$, yielding $\theta_S=\theta_0+\tau_S+\Delta_S$. Left: HyperFix is trained on task subsets with $|S|\leq3$ (singleton, pair, triple) using KL-based distillation from the corresponding single-task teachers. Right: HyperFix generalizes to larger task subsets ($|S|>3$), where the learned correction is applied without additional optimization.}
  \label{fig:overview}
\end{figure*}

In this section, we present HyperFix, a hypernetwork-based correction framework for task vector merging (Figure~\ref{fig:overview}). We view merging across varying task subsets as a \emph{combinatorial correction} problem, where the goal is to learn a shared correction function. HyperFix
augments linear merging with a subset-conditioned correction generated by a nonlinear hypernetwork. Learned from small task subsets, the shared correction function is applied without additional optimization to larger subsets from the
same task bank. We first motivate the need for subset-conditioned corrections and then describe the subset embedding, hypernetwork parameterization, training objective, and inference procedure.

\subsection{Motivation}

Linear merging assumes that linearly combining task-specific parameter updates in weight space is sufficient to retain the capabilities of the selected tasks in a single model. Let $\theta_0$ denote the pretrained model parameters and $\theta_i$ the fine-tuned parameters for task $i$, and define the task vector as $\tau_i = \theta_i - \theta_0$. For a subset of tasks $S$, linear merging produces a merged task vector $\tau_S$
\begin{equation}
\label{eq:mean}
\tau_S=\frac{1}{|S|}\sum_{i\in S} \tau_i.
\end{equation}
This yields the merged parameters $\theta_0+\tau_S$.
HyperFix operates on top of this merged task vector and can
also be applied to other merging rules, such as TIES~\cite{yadav2023ties}.

However, this additive assumption ignores potential nonlinear interactions between task vectors. In practice, merging multiple task vectors can introduce non-additive interference, leading to performance degradation as the subset size increases. To capture interactions beyond linear merging, we introduce a subset-conditioned correction term 
\begin{equation}
\label{eq:merge}
\theta_S = \theta_0 + \tau_S + \Delta_S,
\end{equation}
where $\Delta_S$ captures structured interaction effects that depend on the specific combination of tasks in $S$. Importantly, these effects are not universal offsets but vary with the alignment and conflict structure among task vectors in the subset. The construction of $\Delta_S$ is detailed in the following subsection.

\subsection{HyperFix: Combinatorial Nonlinear Correction}
\label{sec:hyperfix}
To model interaction effects beyond linear merging, we require a compact representation of a task subset. Conditioning directly on raw task vectors would entail operating in the full encoder parameter space, which contains tens of millions of dimensions (e.g., CLIP ViT-B/32), making learning computationally expensive and difficult to optimize. Instead, we construct a low-dimensional task-level representation that captures relationships between task vectors.

\noindent\textbf{Subset embedding.} We summarize pairwise relationships between task vectors using a task-level Gram matrix
\begin{equation}
\label{eq:gram}
G_{ij}
=
\langle \tau_i , \tau_j \rangle,
\end{equation}
where the inner product is taken over all encoder parameters. The Gram matrix is computed once from the task vectors, mean-centered across tasks, and kept fixed during HyperFix training.

Let $N$ be the number of tasks and $G \in \mathbb{R}^{N \times N}$ the corresponding Gram matrix. We define the task embedding $z_i \in \mathbb{R}^N$ as the $i$-th row of $G$
\begin{equation}
z_i = G_{i:}.
\end{equation}
For a subset $S$, we define the subset embedding
\begin{equation}
\label{eq:embedding}
z_S = \frac{1}{|S|}\sum_{i\in S} z_i.
\end{equation}
The Gram matrix provides pairwise relationship features among task vectors in weight space. Averaging the corresponding task-level embeddings yields a permutation-invariant representation of the interaction structure within a subset. This embedding, therefore, encodes interaction statistics rather than raw parameters, allowing the hypernetwork to focus on modeling residual nonlinear effects.

\noindent\textbf{Structured low-rank correction via hypernetwork.} Given the subset embedding $z_S$, HyperFix uses a hypernetwork $h_\phi$ to predict a structured low-rank correction
\begin{equation}
\label{eq:hyper}
\Delta_S = h_\phi(z_S).
\end{equation}
For each encoder weight matrix $W_\ell \in \mathbb{R}^{d_\ell \times m_\ell}$, HyperFix predicts low-rank factors $U_\ell \in \mathbb{R}^{d_\ell \times r}$ and $V_\ell \in \mathbb{R}^{m_\ell \times r}$, forming a LoRA-style update $\Delta W_\ell = U_\ell V_\ell^\top$, where the rank is set to $r=4$ in all experiments. The low-rank parameterization improves structural efficiency while allowing expressive corrections distributed across layers. The hypernetwork is implemented as a two-layer MLP with a hidden dimension of 512 and GELU activation.  The MLP outputs a single concatenated vector that is partitioned and reshaped into the per-layer factors $(U_\ell, V_\ell)$, thereby producing all encoder updates in a single forward pass. The resulting correction augments the linear merge in Eq.~(\ref{eq:merge}).

\noindent\textbf{Training objective.} Our primary training objective is to optimize the hypernetwork parameters by aligning the predictive distributions of the merged model $\theta_S$ with those of the corresponding single-task models. Specifically, we employ KL-based knowledge distillation. During training, the image encoder and all task-specific classification heads are frozen, and only the HyperFix parameters are optimized. We train HyperFix on subsets of size one to three (singleton, pair, and triple) to learn correction patterns while keeping training computationally tractable.

For each task $i \in S$, we use the corresponding single-task model $\theta_i$ as a reference and compare its predictions with those of the merged model $\theta_S$. 
Let $p_i(\cdot \mid x)$ and $q_i(\cdot \mid x)$ denote the predictive distributions induced by $\theta_i$ and $\theta_S$, respectively, for task $i$ and input $x$. 
We minimize the average KL divergence
\begin{equation}
L_S =
\frac{1}{|S|}
\sum_{i \in S}
\mathrm{KL}\!\left(
p_i(\cdot \mid x)
\;\|\;
q_i(\cdot \mid x)
\right).
\end{equation}
For each $i \in S$, the KL divergence is computed on mini-batches sampled from task $i$'s dataset, and the loss is averaged across tasks in $S$. 

\noindent\textbf{Inference.} At inference time, no additional optimization is required. Given a subset $S$, we compute $z_S$, predict $\Delta_S$ via Eq.~(\ref{eq:hyper}), and construct $\theta_S$ using Eq.~(\ref{eq:merge}). The merged model is then directly evaluated.

\section{Theoretical Analysis}
\label{sec:theory}

In this section, we provide a theoretical analysis of HyperFix, explaining why training on task subsets of size at most three is sufficient for generalization to larger subsets not used during training. \footnote{Our theoretical results are local and perturbative in nature. We analyze behavior in a neighborhood of the pretrained model $\theta_0$ under small task updates, where empirical loss landscapes are often locally smooth and well-conditioned in practice. The assumptions below are not intended to globally characterize deep networks, but to isolate the mechanisms governing nonlinear interaction effects near $\theta_0$.} Detailed proofs are provided in the Supplementary.

\subsection{Preliminaries and Notation}

We adopt the notation introduced in the Method section. Recall that linear merging produces parameters 
$\theta_0 + \tau_S$ (Eq.~\ref{eq:mean}), and HyperFix augments this with a correction term 
as in Eq.~\ref{eq:merge}. For analysis, define the subset-averaged loss
\begin{equation}
L_S(\theta) := \frac{1}{|S|}\sum_{i\in S} L_i(\theta).
\end{equation}
Let $\theta_S^\star$ denote a local stationary point of $L_S$ near $\theta_0$ (i.e., $\nabla L_S(\theta_S^\star)=0$). The \emph{ideal residual correction} beyond linear merging is
\begin{equation}
\Delta^\star_S := \theta_S^\star - (\theta_0+\tau_S).
\end{equation}
Finally, let
\begin{equation}
\bar z := \frac{1}{N}\sum_{i=1}^{N} z_i,
\qquad
\sigma_z^2 := \frac{1}{N}\sum_{i=1}^{N}|z_i-\bar z|^2
\end{equation}
denote the mean and variance of the task embeddings, respectively.

\subsection{Assumptions} We introduce the assumptions used in our analysis.

\noindent\textbf{A1 (Local smoothness).}
Each task loss $L_i(\theta)$ is three-times continuously differentiable in a neighborhood $\mathcal{N}$ of $\theta_0$, and
\begin{equation}
\|\nabla^2 L_i(\theta)\|_{\mathrm{op}} \le H, 
\quad 
\|\nabla^3 L_i(\theta)\|_{\mathrm{op}} \le M,
\end{equation}
for all $\theta \in \mathcal{N}$.

\noindent\textbf{A2 (Local conditioning).}
For any subset $S$, we assume that $L_S$ admits a locally well-conditioned Hessian in a neighborhood of $\theta_0$, i.e., 
\begin{equation}
\nabla^2 L_S(\theta) \succeq \mu I,
\end{equation}
for some $\mu > 0$ in $\mathcal{N}$.

\noindent\textbf{A3 (Small task updates).}
Task vectors satisfy $\|\tau_i\| \le \rho$ for all $i$, where $\rho$ is sufficiently small so that $\theta_0 + \tau_S \in \mathcal{N}$.

\noindent\textbf{A4 (Low-order representability).}
The dominant component of the ideal correction $\Delta^\star_S$ admits a smooth representation in a permutation-invariant subset embedding space
\begin{equation}
\Delta^\star_S = g(z_S) + \eta(S),
\quad
\|\eta(S)\| \le \varepsilon,
\end{equation}
where $g$ is $L_g$-Lipschitz.

\subsection{Low-Order Interaction Generalization}

\begin{theorem}[Nonlinear remainder] 
\label{thm:residual} 

\noindent Under Assumptions A1--A3, 
\begin{equation}
\|\Delta^\star_S\|
\le
\frac{1}{\mu}\bigl(\|\nabla L_S(\theta_0)\|+H\rho\bigr),
\end{equation}
and the nonlinearity beyond the first-order Hessian term satisfies
\begin{equation}
\nabla L_S(\theta_0+\tau_S)
=
\nabla L_S(\theta_0)+\nabla^2 L_S(\theta_0)\tau_S+R_S,
\end{equation}
where $\|R_S\|\le \frac{M}{2}\rho^2$. Hence the parameter effect induced by this nonlinear remainder is bounded by
\begin{equation}
\bigl\| \big(\int_0^1 \nabla^2 L_S(\theta_0+\tau_S+t\Delta^\star_S)\,dt \big)^{-1} R_S \bigr\|
\le
\frac{M}{2\mu}\rho^2.
\end{equation}
\end{theorem}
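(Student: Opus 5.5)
The argument has three parts, all resting on Taylor expansions of $\nabla L_S$ along segments inside $\mathcal{N}$. Throughout we use that $\nabla^2 L_S$ and $\nabla^3 L_S$ are averages of the per-task quantities, so A1 gives $\|\nabla^2 L_S\|_{\mathrm{op}}\le H$ and $\|\nabla^3 L_S\|_{\mathrm{op}}\le M$ on $\mathcal{N}$. By A3 and the triangle inequality, $\|\tau_S\|\le \frac{1}{|S|}\sum_{i\in S}\|\tau_i\|\le\rho$. We also assume, implicitly in the statement, that $\theta_S^\star$ and the segments joining it to $\theta_0$ and to $\theta_0+\tau_S$ lie in $\mathcal{N}$. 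This is needed for every integral mean-value step below.

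\textbf{First bound.} Write $\theta_S^\star=\theta_0+\tau_S+\Delta^\star_S$. Expanding $\nabla L_S$ around $\theta_0+\tau_S$ and using $\nabla L_S(\theta_S^\star)=0$ gives
\[
0=\nabla L_S(\theta_0+\tau_S)+A_S\Delta^\star_S,\qquad A_S:=\int_0^1\nabla^2 L_S(\theta_0+\tau_S+t\Delta^\star_S)\,dt.
\]
A2 gives $A_S\succeq\mu I$, so $A_S$ is invertible with $\|A_S^{-1}\|_{\mathrm{op}}\le 1/\mu$, and therefore $\|\Delta^\star_S\|\le \frac{1}{\mu}\|\nabla L_S(\theta_0+\tau_S)\|$. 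A second mean-value expansion from $\theta_0$ to $\theta_0+\tau_S$, with the Hessian bounded by $H$, gives $\|\nabla L_S(\theta_0+\tau_S)\|\le\|\nabla L_S(\theta_0)\|+H\|\tau_S\|\le\|\nabla L_S(\theta_0)\|+H\rho$. Combining the two yields the first claim.

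\textbf{Remainder identity.} Define $R_S$ as the second-order Taylor remainder of $\nabla L_S$ at $\theta_0$ in direction $\tau_S$. In integral form,
\[
R_S=\int_0^1(1-t)\,\nabla^3 L_S(\theta_0+t\tau_S)[\tau_S,\tau_S]\,dt.
\]
The third-derivative bound gives $\|R_S\|\le M\|\tau_S\|^2\int_0^1(1-t)\,dt=\frac{M}{2}\rho^2$. This requires A1 to hold along the segment from $\theta_0$ to $\theta_0+\tau_S$, which A3 provides if $\mathcal{N}$ is taken convex, or at least star-shaped about $\theta_0$.

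\textbf{Final bound and main obstacle.} The last claim is $\|A_S^{-1}R_S\|\le\|A_S^{-1}\|_{\mathrm{op}}\|R_S\|\le\frac{M}{2\mu}\rho^2$, using the same $A_S$ and its conditioning from A2. Substituting the remainder identity into the stationarity relation also shows $\Delta^\star_S=-A_S^{-1}\bigl(\nabla L_S(\theta_0)+\nabla^2L_S(\theta_0)\tau_S\bigr)-A_S^{-1}R_S$. This exhibits the last term as the purely nonlinear part of the correction, which is the interpretation the theorem intends.

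The calculations are routine. The real obstacle is rigor about the domain: A2 is stated pointwise on $\mathcal{N}$, so we need the segment $\theta_0+\tau_S+t\Delta^\star_S$ to stay in $\mathcal{N}$ for the averaged Hessian to inherit $\mu I$. I would handle this by defining $\theta_S^\star$ as the unique stationary point of the strongly convex restriction of $L_S$ to a convex ball $\mathcal{N}$. Uniqueness follows from A2, and existence follows for $\rho$ and $\|\nabla L_S(\theta_0)\|$ small enough relative to the ball's radius, by the first bound applied a priori.
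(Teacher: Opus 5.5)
Your proposal is correct and follows the paper's proof essentially step for step. It uses the same averaged Hessian $A_S$ with $\|A_S^{-1}\|_{\mathrm{op}}\le 1/\mu$, the same mean-value bound $\|\nabla L_S(\theta_0+\tau_S)\|\le\|\nabla L_S(\theta_0)\|+H\rho$, and the same final step $\|A_S^{-1}R_S\|\le\|A_S^{-1}\|_{\mathrm{op}}\|R_S\|$. The only differences are cosmetic: you write the remainder as $\int_0^1(1-t)\nabla^3L_S(\theta_0+t\tau_S)[\tau_S,\tau_S]\,dt$, while the paper uses the equivalent form $\int_0^1[\nabla^2L_S(\theta_0+t\tau_S)-\nabla^2L_S(\theta_0)]\tau_S\,dt$ with a Lipschitz-Hessian bound, and your care that the segments lie in a convex $\mathcal{N}$ makes explicit a requirement the paper leaves implicit.
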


\subsubsection{Interpretation.}
The first inequality bounds the \emph{overall} correction needed beyond linear merging. It is controlled by (i) local conditioning ($1/\mu$), (ii) the task-update scale $\rho$, and (iii) the gradient bias at $\theta_0$. The gradient term represents baseline mismatch between the pretrained model and the task mixture. It does not scale with task-update magnitude nor reflect higher-order interactions.
The second inequality isolates the \emph{nonlinear interaction} component: under bounded third derivatives,
the part not captured by the first-order Hessian term (instead captured by the curvature-driven remainder) shrinks quadratically with $\rho$ ($O(\rho^2)$). Thus, when task vectors are small, the curvature-induced component of the correction remains controlled. This result motivates learning a compact residual correction.

Theorem 1 establishes that the residual correction beyond linear merging is controlled in magnitude and dominated by low-order interaction terms under small task updates. In particular, the nonlinear remainder scales quadratically with the update size. This suggests that the dominant interaction structure is smooth and locally well-behaved in the subset embedding space. We now formalize how learning such low-order interactions from small task subsets supports generalization to larger subsets not used during training.

\begin{theorem}[Low-order interaction generalization]
\label{thm:low_order}

\noindent Let $h_\phi$ be a hypernetwork trained on task subsets with $|S| \le 3$ to minimize $\mathbb{E}_{|S| \le 3} \| h_\phi(z_S) - \Delta^\star_S \|^2$. Then for any subset size $m \ge 2$,
\begin{equation}
\mathbb{E}_{|S|=m}
\| h_\phi(z_S) - \Delta^\star_S \|
\le \varepsilon
+
\frac{L_g \sigma_z}{\sqrt{m}},
\end{equation}
where $\sigma_z^2$ bounds the variance of the task embeddings $z_i$ across tasks. The $\frac{1}{\sqrt{m}}$ term arises from the concentration of the empirical subset embedding $z_S$ around its expectation as the subset size increases.
\end{theorem}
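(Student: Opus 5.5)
We plan to prove Theorem~\ref{thm:low_order} by splitting the error with the triangle inequality and Assumption A4. For any subset $S$,
\begin{equation}
\| h_\phi(z_S) - \Delta^\star_S \| \le \|\eta(S)\| + \| h_\phi(z_S) - g(z_S)\| \le \varepsilon + \| h_\phi(z_S) - g(z_S)\|.
\end{equation}
The whole argument then reduces to controlling how far the learned map $h_\phi$ is from the smooth representer $g$ at embeddings of size-$m$ subsets.

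Since the theorem leaves the learned map only loosely specified, we will make the idealization explicit. We take $h_\phi$ to be the population minimizer of the training objective, restricted to a regime where it agrees with $g$ at the anchor point $\bar z$. Concretely, we would argue that minimizing $\mathbb{E}_{|S|\le 3}\|h_\phi(z_S)-\Delta^\star_S\|^2$ over a sufficiently rich, $L_g$-Lipschitz hypothesis class containing $g$ recovers $g$ up to the irreducible noise $\eta$. This noise is already accounted for by $\varepsilon$. Equivalently, the extrapolation error of $h_\phi$ is governed by how far $z_S$ departs from the region covered during training, all of whose embeddings concentrate around $\bar z$.

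Under this idealization, we bound $\|h_\phi(z_S)-g(z_S)\|$ by the Lipschitz deviation $L_g\|z_S-\bar z\|$. The key step is the concentration of $z_S=\frac1m\sum_{i\in S} z_i$ around $\bar z$. We treat $S$ as $m$ draws (with replacement, or without replacement using the finite-population correction $\frac{N-m}{N-1}\le 1$) from the $N$ task embeddings. This gives $\mathbb{E}\|z_S-\bar z\|^2 \le \sigma_z^2/m$, and Jensen's inequality then yields $\mathbb{E}\|z_S-\bar z\|\le \sigma_z/\sqrt m$. Multiplying by $L_g$ and combining with the first display gives the stated bound for every $m\ge 2$.

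The main obstacle is the middle step: justifying that $\|h_\phi(z_S)-g(z_S)\|\le L_g\|z_S-\bar z\|$. Training on $|S|\le 3$ does not by itself pin down $h_\phi$ away from the training embeddings. We would handle this by stating as an explicit modeling assumption that $h_\phi$ matches $g$ at the mean embedding and inherits $g$'s Lipschitz constant, so that the difference is Lipschitz with constant $L_g$. One way to phrase this is to treat $h_\phi$ as effectively constant near $\bar z$, as the best predictor for highly concentrated embeddings would be. Stating this idealization openly keeps the rest of the argument routine.
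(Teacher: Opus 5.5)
Your argument is essentially the paper's. Its lemma proves $\mathbb{E}\|\Delta^\star_S-g(\mathbb{E}z)\|\le\varepsilon+L_g\sigma_z/\sqrt{m}$ by exactly your chain: A4 with the triangle inequality, Lipschitzness of $g$, then $\mathbb{E}\|z_S-\mathbb{E}z\|^2\le\sigma_z^2/m$ and Jensen. It never mentions $h_\phi$, so it tacitly relies on the same idealization $h_\phi\equiv g(\bar z)$ that you state openly. Your finite-population correction also makes the concentration step exact for uniformly random $m$-subsets, where the paper only appeals to approximately i.i.d.\ draws.

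Do rely on that constant version, though. If $h_\phi$ merely agrees with $g$ at $\bar z$ and is itself $L_g$-Lipschitz, then $h_\phi-g$ is only guaranteed to be $2L_g$-Lipschitz, which doubles the second term. The least-squares remark in your second paragraph at best controls $h_\phi-g$ on the training embeddings and does not carry the bound.
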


\subsubsection{Interpretation.}
HyperFix generalizes from small task subsets because task interactions are locally smooth and dominated by low-order effects. If each task update $\tau_i$ is small, and the loss landscape around the pretrained model $\theta_0$ is smooth, then the deviation from linear merging arises primarily from pairwise and triple interactions. Higher-order interactions decay rapidly with the magnitude of task updates. Furthermore, the subset embedding $z_S$ is an empirical average of task interaction statistics. As the subset size increases, this average becomes more stable due to the concentration of measure. Since HyperFix learns a smooth mapping from $z_S$ to correction parameters, it can extrapolate from small task subsets ($|S| \le 3$) to larger task subsets. This explains why training up to triples is sufficient.

\section{Experiments}

In this section, we evaluate HyperFix under the combinatorial correction setting through three key questions. First, can a correction function trained only on small task subsets generalize to larger subsets while maintaining performance? Second, do the predicted corrections meaningfully adapt to the selected task subset, rather than acting as a fixed global adjustment? Third, can HyperFix replace repeated per-subset tuning with a shared training procedure, thereby reducing computational cost while maintaining competitive or superior performance?

\subsection{Setup}

\subsubsection{Combinatorial correction setting.} We evaluate the \emph{combinatorial correction} problem on eight image classification benchmarks: Cars~\cite{krause20133d}, DTD~\cite{cimpoi2014describing}, EuroSAT~\cite{helber2019eurosat}, GTSRB~\cite{stallkamp2011german}, MNIST~\cite{lecun1998mnist}, RESISC45~\cite{cheng2017remote}, SUN397~\cite{xiao2016sun}, and SVHN~\cite{yuval2011reading}. Given task vectors $\{\tau_i\}_{i=1}^N$, the goal is to learn a shared correction function such that, for any subset $S \subseteq \{1,\dots,N\}$, the merged model performs well on all tasks in $S$. HyperFix is trained only on subsets with $|S|\le3$ (singleton, pair, and triple) and evaluated on both subset sizes used during training ($|S|=2,3$) and larger subset sizes not used during training ($|S|=4,\dots,8$). For each subset size $|S|$, we evaluate all $\binom{8}{|S|}$ possible subsets and report their average performance.

\subsubsection{Baselines.}
We evaluate both weight-space~\cite{ilharcoediting} and tangent-space~\cite{ortiz2023task} task vectors with the same merging strategies. \emph{Mean}~\cite{wortsman2022model} performs uniform averaging of task vectors without additional scaling. \emph{Sum $+$ Scalar}~\cite{ilharcoediting} sums task vectors and selects a scalar coefficient separately for each subset from 21 evenly spaced values in $[0,1]$ to maximize the average normalized validation accuracy. \emph{TIES $+$ Scalar}~\cite{yadav2023ties} first resolves sign conflicts using TIES merging and then applies the same scalar search. In contrast, HyperFix augments linear merging with a subset-conditioned nonlinear correction and requires no per-subset tuning. We evaluate \emph{Mean $+$ HyperFix} and \emph{TIES $+$ HyperFix}, where the learned correction is applied on top of the corresponding linear merge rule.

\subsubsection{Training details.}
We conduct experiments using pretrained CLIP models with ViT-B/32, ViT-B/16, and ViT-L/14 backbones~\cite{radford2021learning}. Main results are reported on ViT-B/32, with additional results for ViT-B/16 and ViT-L/14 provided in the Supplementary. During training, all encoder and task-specific classification head parameters are frozen, and only the hypernetwork parameters are optimized. Training is performed for 10 epochs over all singleton, pair, and triple task subsets ($|S|\in\{1,2,3\}$), with random shuffling at each epoch and a batch size of 128 for each task. For each subset, the hypernetwork is optimized by averaging the logit-level KL distillation losses from the corresponding single-task models, using temperature $T=2.0$. We optimize the hypernetwork using AdamW with learning rate $1\times10^{-4}$ and weight decay $0.1$. Gradients are clipped to 1.0. The hypernetwork is a two-layer MLP with hidden dimension 512 and predicts low-rank updates with rank $r=4$ for each encoder weight matrix. Task vectors are computed from independently fine-tuned models on the training split. HyperFix is trained on the validation split, which is also used to tune the scalar coefficients for the scalar-based baselines, and all reported results are evaluated on the held-out test split. All experiments are conducted on a single NVIDIA GH200 GPU.

\subsubsection{Metrics.} We report normalized accuracy following prior work~\cite{ilharcoediting}, defined as the task accuracy divided by the accuracy of the corresponding single-task fine-tuned model. For each subset, normalized accuracy is averaged over all tasks in the subset, and we report the mean and standard deviation across all subsets of the same size. We additionally report absolute accuracies in the Supplementary, where we observe consistent method rankings and performance trends across subset sizes.

\subsection{Generalization Across Task Subsets}

\begin{table*}[t]
\centering

\begin{tabular}{lcccccccc}
\toprule
Method & $|S|=2$ & $|S|=3$ & $|S|=4$ & $|S|=5$ & $|S|=6$ & $|S|=7$ & $|S|=8$ & Avg. \\
\midrule
\multicolumn{9}{l}{Standard Fine-Tuning}\\
\quad Mean
& 95.7\std{1.4} & 89.6\std{2.2} & 84.3\std{2.6} & 80.1\std{2.5} & 77.0\std{2.1} & 74.6\std{1.6} & 72.9\std{0.0} & 82.0 \\
\quad Sum $+$ Scalar
& 96.4\std{1.1} & 92.1\std{1.4} & 88.2\std{1.7} & 84.7\std{1.7} & 81.7\std{1.4} & 79.2\std{1.3} & 77.0\std{0.0} & 85.6 \\
\quad TIES $+$ Scalar
& 97.6\std{1.1} & 94.4\std{1.3} & 91.0\std{1.6} & 87.8\std{1.6} & 85.0\std{1.6} & 82.6\std{1.3} & 80.9\std{0.0} & 88.5 \\

\quad Mean $+$ HyperFix
& 97.7\std{1.7} & 96.3\std{1.8} & 95.1\std{1.6} & 94.0\std{1.4} & 93.2\std{1.1} & 92.6\std{0.8} & 92.0\std{0.0} & 94.4 \\

\quad TIES $+$ HyperFix
& \textbf{98.6\std{1.3}} & \textbf{97.9\std{1.2}}
& \textbf{97.0\std{1.2}} & \textbf{96.0\std{1.1}}
& \textbf{94.9\std{1.0}} & \textbf{93.9\std{0.7}}
& \textbf{92.9\std{0.0}} & \textbf{95.9}\\
\midrule
\multicolumn{9}{l}{Tangent-Space Fine-Tuning}\\
\quad Mean
& 96.8\std{1.4} & 92.5\std{1.9} & 88.8\std{2.1} & 85.8\std{2.0} & 83.4\std{1.7} & 81.4\std{1.3} & 79.8\std{0.0} & 86.9 \\
\quad Sum $+$ Scalar
& 97.8\std{1.3} & 95.4\std{1.5} & 92.9\std{1.5} & 90.8\std{1.3} & 88.8\std{1.1} & 87.2\std{0.9} & 85.7\std{0.0} & 91.2 \\
\quad TIES $+$ Scalar
& 97.5\std{1.4} & 94.5\std{1.5} & 91.9\std{1.5} & 89.7\std{1.3} & 88.0\std{1.1} & 86.4\std{0.9} & 85.0\std{0.0} & 90.4 \\

\quad Mean $+$ HyperFix
& \textbf{99.1\std{1.1}} & 98.1\std{1.0} & 97.0\std{0.9} & 96.0\std{0.8} & 95.0\std{0.7} & 94.1\std{0.6} & 93.2\std{0.0} & 96.1 \\

\quad TIES $+$ HyperFix
& 98.9\std{1.3} & \textbf{98.3\std{1.2}} & \textbf{97.4\std{1.1}} & \textbf{96.6\std{1.0}} & \textbf{95.7\std{0.8}} & \textbf{94.9\std{0.6}} & \textbf{94.1\std{0.0}} & \textbf{96.5} \\
\bottomrule
\end{tabular}
\caption{Generalization performance of HyperFix on task subsets. We evaluate merging performance across subset sizes $|S|=2,\dots,8$ after training only on small task subsets (singleton, pair, and triple). Results are averaged over all $\binom{8}{|S|}$ subsets for each $|S|$ and reported as normalized accuracy (\%). 
 We compare methods using standard fine-tuning~\cite{ilharcoediting} (top) and tangent-space fine-tuning (bottom)~\cite{ortiz2023task}. Avg. reports the average normalized accuracy across all subset sizes. }
\label{tab:merging_results}
\end{table*}

\subsubsection{Training on small task subsets enables generalization to larger subsets.} Table~\ref{tab:merging_results} reports normalized accuracy across all subset sizes. While Mean performs competitively for small task subsets ($|S|=2$), its performance degrades sharply as more tasks are merged, dropping from 95.7 \% to 72.9 \% under standard fine-tuning. Although Scalar tuning mitigates this degradation, it still exhibits a substantial performance drop as $|S|$ increases. In contrast, trained only on task subsets of size at most three, HyperFix maintains high performance on $|S|=4,\dots,8$ without additional tuning. Under standard fine-tuning, Mean $+$ HyperFix improves the $|S|=8$ performance from 72.9 \% (Mean) to 92.0 \%, surpassing both Sum $+$ Scalar (77.0 \%) and TIES $+$ Scalar (80.9 \%), despite these methods performing per-subset scalar tuning (with TIES additionally resolving sign conflicts). TIES $+$ HyperFix further reaches 92.9 \%. Notably, the performance gap between linear merging baselines and HyperFix widens as $|S|$ increases, suggesting that non-additive task interactions become increasingly important for larger task subsets. The same trend holds under tangent-space fine-tuning. Mean $+$ HyperFix consistently outperforms all linear merging baselines across every subset size. This consistency across both fine-tuning regimes indicates that the gains of HyperFix arise from modeling structured task interactions rather than from a specific task-vector construction. We confirm that the performance gains persist across backbone scales, including ViT-B/16 and ViT-L/14 (see Supplementary). The reported standard deviations reflect variation across all $\binom{8}{|S|}$ task subsets. When $|S|=8$, there is only a single subset, resulting in zero standard deviation by definition.

\begin{table*}[t]
\centering

\begin{tabular}{lcccccccc}
\toprule
Objective & $|S|=2$ & $|S|=3$ & $|S|=4$ & $|S|=5$ & $|S|=6$ & $|S|=7$ & $|S|=8$ & Avg. \\
\midrule
CE & 95.0\std{3.6} & 93.6\std{2.9} & 92.1\std{2.5} & 90.8\std{2.1} & 89.6\std{1.7} & 88.7\std{1.2} & 87.8\std{0.0} & 91.1 \\

KL (ours) & \textbf{97.7\std{1.7}} & \textbf{96.3\std{1.8}} & \textbf{95.1\std{1.6}} & \textbf{94.0\std{1.4}} & \textbf{93.2\std{1.1}} & \textbf{92.6\std{0.8}} & \textbf{92.0\std{0.0}} & \textbf{94.4} \\
\bottomrule
\end{tabular}
\caption{Effectiveness of training objective for combinatorial correction. We compare cross-entropy (CE) training with logit-level KL distillation for Mean $+$ HyperFix. We report normalized accuracy (\%) averaged over all task subsets for each subset size. Avg. further averages results across $|S|=2,\dots,8$. KL uses logit-level distillation from the corresponding single-task fine-tuned models, while CE uses ground-truth labels. All other hyperparameters are identical. KL consistently improves performance, particularly for larger task subsets not used during training ($|S|\ge 4$).
}
\label{tab:loss_ablation}
\end{table*}

\subsubsection{KL distillation improves generalization across task subsets.} 
We compare two training objectives for Mean $+$ HyperFix under identical data usage and optimization settings: (i) KL distillation from the corresponding single-task fine-tuned models, and (ii) supervised CE on ground-truth labels from the same training task subsets. Table~\ref{tab:loss_ablation} shows that KL consistently yields stronger performance across all subset sizes, with increasingly larger gains as $|S|$ grows. For example, at $|S|=8$, KL achieves 92.0\% compared to 87.8\% for CE (+4.2 percentage points). Overall, KL improves the average normalized accuracy from 91.1\% to 94.4\% (+3.3 percentage points). These results indicate that distillation provides a smoother and more structured learning signal for predicting parameter corrections than direct supervision with ground-truth labels alone.

\begin{table*}[t]
\centering
\begin{tabular}{lcccccccc}
\toprule
$|S|_{\max}$ & 1 & 2 & 3 & 4 & 5 & 6 & 7 & 8 \\
\midrule
Acc.
& 82.0
& 90.4
& 94.4
& 95.3
& 95.7
& 95.9
& 95.9
& 95.9 \\
Gain
& --
& +8.4
& +4.0
& +0.9
& +0.4
& +0.2
& +0.0
& +0.0 \\
\bottomrule
\end{tabular}
\caption{Ablation on maximum training subset size. HyperFix is trained only on subsets with size $|S|\le |S|_{\max}$, where $|S|_{\max}$ denotes the maximum subset size used during training. Reported values are averaged across subset sizes $|S|=2,\ldots,8$, after averaging over all task subsets at each size. Performance improves as $|S|_{\max}$ increases from 1 to 3, while additional gains beyond $|S|_{\max}\ge4$ are limited, consistent with our theory that dominant interaction effects are captured by low-order subsets.}
\label{tab:training_subset_saturation}
\end{table*}

\subsubsection{Ablation on maximum training subset size.} Table~\ref{tab:training_subset_saturation} analyzes how the maximum subset size $|S|_{\max}$ (i.e., the largest subset size included during HyperFix training) affects merging performance for Mean $+$ HyperFix. Training only on single-task subsets ($|S|_{\max}=1$) yields 82.0\%. Including pairwise subsets ($|S|_{\max}=2$) improves performance to 90.4\% (+8.4 points), indicating that including pair subsets provides substantial additional supervision. Extending training to triple subsets ($|S|_{\max}=3$) further increases performance to 94.4\% (+4.0 points), already achieving strong generalization to unseen larger subsets up to $|S|=8$. Beyond this point, additional exposure to higher-order subsets yields only marginal gains, increasing from 94.4\% to at most 95.9\% (+1.5 points overall). This saturation suggests that training on low-order subsets captures most of the correction patterns needed for larger-subset generalization. These empirical findings are consistent with Theorem~\ref{thm:residual} and Theorem~\ref{thm:low_order}, which together indicate that nonlinear remainders are controlled in magnitude and that once dominant low-order interactions are learned, higher-order effects contribute diminishing additional benefit.

\subsection{Subset-Specificity of Corrections}

\begin{figure}[t]
\centering
    \includegraphics[width=0.9\linewidth]{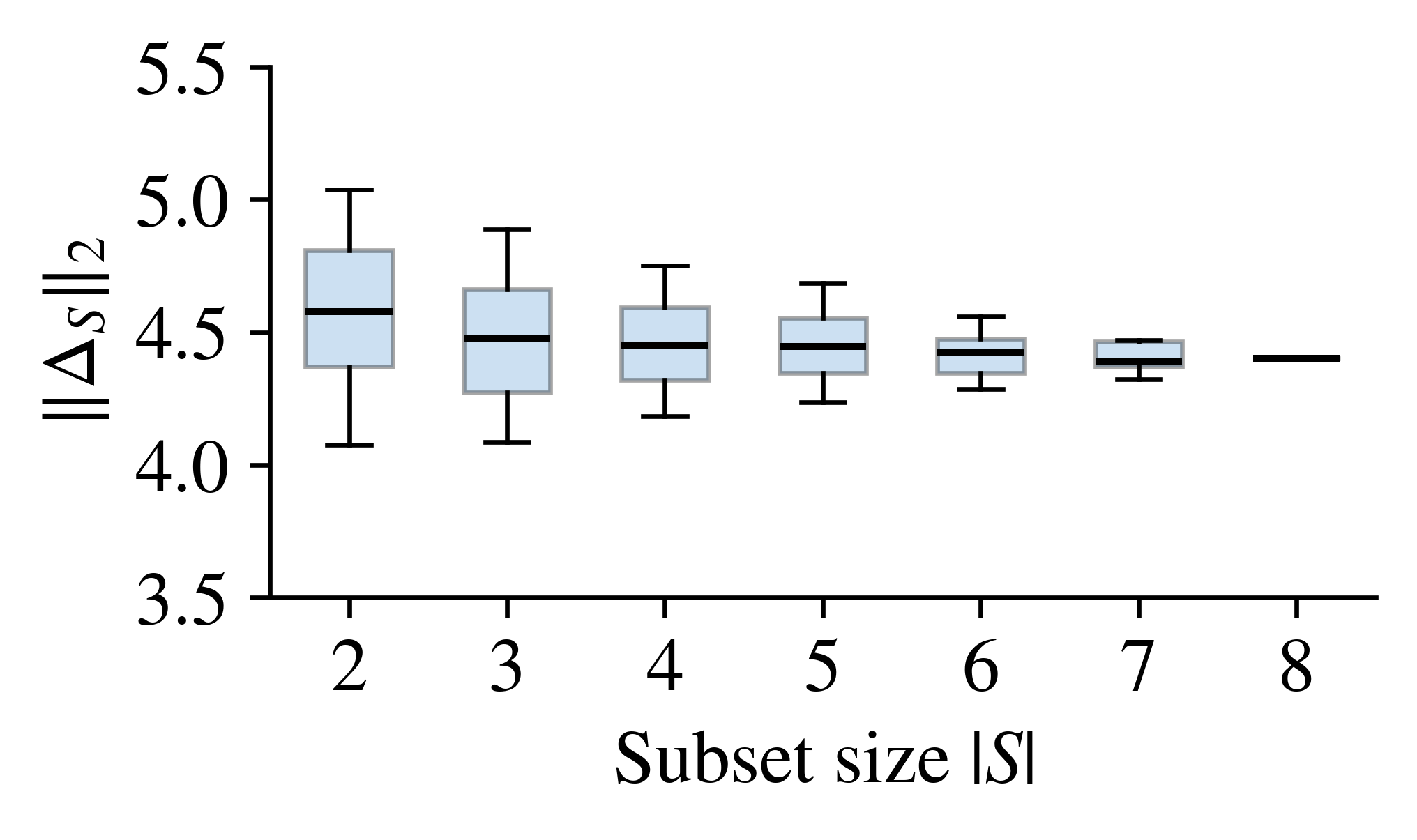}
    \caption{Magnitude of corrections across subset sizes. We show the distribution of the $\ell_2$ norm of the correction $\Delta_S$ for each subset size, computed over all $\binom{8}{|S|}$ task subsets. Boxes indicate interquartile ranges, center lines denote medians, and whiskers show the 5th--95th percentile range. Across all subset sizes, the median remains stable.
    }
    \label{fig:correction_magnitude}
\end{figure}

\begin{table}[t]
    \centering
    \begin{tabular}{l c}
    \toprule
    Method & Acc. (\%) \\
    \midrule
    Mean      & 95.7\std{1.4} \\
    Sign-flipped      & 93.3\std{4.6} \\
    Shuffled  & 95.4\std{3.2} \\
    \textbf{HyperFix (Ours)} & \textbf{97.7\std{1.7}} \\
    \bottomrule
    \end{tabular}
    \caption{Dependence on the subset embedding. We report normalized accuracy (\%) for $|S|=2$ under different inputs. Replacing $z_S$ with shuffled or sign-flipped embeddings degrades performance, indicating that the predicted correction meaningfully depends on the subset representation.}
    \label{tab:input_ablation}
\end{table}

\subsubsection{Magnitude of corrections.} 
To better understand the behavior of the hypernetwork, we analyze the $\ell_2$ norm of the predicted correction $\Delta_S$ across subset sizes $|S|$ for Mean $+$ HyperFix. For each $|S| \in \{2,\dots,8\}$, we compute $\|\Delta_S\|_2$ over all $\binom{8}{|S|}$ task combinations and examine its distribution. As shown in Figure~\ref{fig:correction_magnitude}, the distribution of $\|\Delta_S\|_2$ remains well-controlled across subset sizes. The median remains at a comparable scale across subset sizes. This empirical behavior is consistent with the residual bound discussed in the Theoretical Analysis section, which predicts that the magnitude of the nonlinear correction is controlled by the local update scale under smoothness assumptions.

\subsubsection{Dependence on the subset embedding.} We analyze how the predicted correction depends on the subset embedding $z_S$ under the Mean $+$ HyperFix setting in Table~\ref{tab:input_ablation}. Given the subset embedding $z_S$ for a subset $S$, we consider two perturbations while keeping the base merge $\tau_S$ fixed: (i) \emph{shuffled}: replace $z_S$ with $z_{\tilde S}$ from a randomly sampled $\tilde S \neq S$, and (ii) \emph{sign-flipped}: replace $z_S$ with $-z_S$. Using the correct embedding yields the highest accuracy (97.7\%), improving over the Mean (95.7\%) by +2.0 points. In contrast, perturbing the input embedding degrades performance, sign-flipping reduces accuracy to 93.3\%, and shuffling yields 95.4\%, both close to or below the Mean. These results indicate that the predicted correction meaningfully depends on the subset embedding rather than acting as a fixed offset independent of the selected task subset.

\subsection{Training Cost and Computational Efficiency}

Although scalar-based merging requires no additional training, it relies on validation time optimization. For each task subset $S$, scalar tuning evaluates $K$ candidate coefficients, each requiring evaluation over all tasks in $S$. Aggregated over all task subsets up to a maximum size $|S|_{\max}$, the cumulative validation cost becomes $\sum_{j=2}^{|S|_{\max}} \binom{N}{j} j K$. In our eight task setting with $K=21$, this results in $21{,}336$ validation model evaluations before testing. In contrast, HyperFix shifts this optimization to a single amortized training phase. We train the hypernetwork once, using only task subsets with $|S|\le3$, corresponding to $\sum_{j=1}^{3} \binom{N}{j}$ training subsets, after which the learned correction function is fixed and applied to all subsets without any per-subset optimization. For a fair comparison, HyperFix is trained on the validation split, using the same data employed for scalar coefficient search. As the number of tasks increases, the number of possible task subsets grows combinatorially, whereas the training cost of HyperFix scales only with subsets up to size three. To quantify the efficiency gain, we measure the wall-clock time required to construct and evaluate the merged model under identical hardware and data settings. All experiments are conducted on a single NVIDIA GH200 (120GB HBM3) GPU with CUDA 12.4. For the full task subset with $|S|=8$, scalar tuning requires \textbf{1634.26} seconds due to validation search over $K=21$ candidate coefficients. In contrast, HyperFix requires only \textbf{284.16} seconds without any per-subset search, reducing the total execution time by \textbf{82.6\%}. This empirical gap directly reflects the elimination of repeated validation-time optimization in HyperFix.

\section{Conclusion}

We introduced \emph{combinatorial correction} as the problem of learning a shared correction function over the combinatorial space of task subsets. We showed that linear task vector merging faces both representational and scalability limitations as the subset size increases. Although trained only on small task subsets, HyperFix generalizes to larger subsets not used during training without per-subset optimization, consistently improving performance across diverse benchmarks. These results demonstrate the importance of modeling nonlinear task interactions for scalable and generalizable task vector merging, opening a path toward more principled and general model merging frameworks.

\bibliography{main}

\clearpage
\section{Supplementary Material}

\noindent This supplementary material provides further experimental results and theoretical proof details.

\subsection{Further Experimental Results}
\label{sec:add_exp}

\subsubsection{Results on Additional Backbones.}

To verify that the benefits of HyperFix are not specific to a single backbone scale, we additionally evaluate on CLIP ViT-B/16 and ViT-L/14. Table~\ref{tab:merging_results_backbones} shows that the overall trend observed on ViT-B/32 remains consistent across larger backbones. Linear merging methods degrade as the subset size increases, whereas Mean + HyperFix maintains much stronger and more stable performance across all subset sizes. For example, on ViT-B/16, the performance of Mean drops from 96.4\% at $|S|=2$ to 77.5\% at $|S|=8$, while Mean + HyperFix retains 92.7\% at $|S|=8$ and improves the overall average from 85.0\% to 95.1\%. Similar behavior is observed on ViT-L/14, where HyperFix again achieves the best average performance and remains stable as more tasks are merged. Each configuration is trained once. Reported means and standard deviations are computed across all $\binom{8}{|S|}$ task subsets.

\begin{table*}[b]
\centering
\caption{Generalization performance across additional backbones. Results are averaged over all $\binom{8}{|S|}$ subsets and reported as normalized accuracy (\%). Values after
$\pm$ denote the standard deviation across subsets. Avg. reports
the average normalized accuracy across subset sizes
$|S|=2,\ldots,8$.}
\label{tab:merging_results_backbones}

\begin{tabular}{llcccccccc}
\toprule
Backbone & Method & $|S|=2$ & $|S|=3$ & $|S|=4$ & $|S|=5$ & $|S|=6$ & $|S|=7$ & $|S|=8$ & Avg. \\
\midrule
\multirow{3}{*}{ViT-B/16} & Mean
& 96.4\std{1.6} & 91.2\std{2.5} & 86.7\std{2.8} & 83.3\std{2.6} & 80.9\std{2.1} & 79.0\std{1.5} & 77.5\std{0.0} & 85.0 \\
& Sum + Scalar
& 97.4\std{1.1} & 94.0\std{1.5} & 90.7\std{1.6} & 87.7\std{1.6} & 85.1\std{1.5} & 83.0\std{1.2} & 81.2\std{0.0} & 88.4 \\
& Mean + HyperFix
& \textbf{98.3\std{1.1}} & \textbf{97.0\std{1.4}} & \textbf{95.8\std{1.4}} & \textbf{94.8\std{1.3}} & \textbf{94.0\std{1.1}} & \textbf{93.3\std{0.8}} & \textbf{92.7\std{0.0}} & \textbf{95.1} \\
\midrule
\multirow{3}{*}{ViT-L/14} & Mean
& 98.0\std{1.0} & 94.7\std{1.8} & 91.7\std{1.9} & 89.2\std{1.8} & 87.2\std{1.5} & 85.6\std{1.2} & 84.4\std{0.0} & 90.1 \\
& Sum + Scalar
& 98.8\std{0.8} & 97.3\std{1.0} & 95.6\std{1.0} & 94.0\std{1.1} & 92.3\std{0.9} & 90.8\std{0.7} & 89.7\std{0.0} & 94.1 \\
& Mean + HyperFix
& \textbf{98.9\std{0.7}} & \textbf{97.9\std{0.8}} & \textbf{97.0\std{0.9}} & \textbf{96.1\std{0.8}} & \textbf{95.4\std{0.6}} & \textbf{94.8\std{0.5}} & \textbf{94.3\std{0.0}} & \textbf{96.4} \\
\bottomrule
\end{tabular}
\end{table*}

\begin{table*}[t]
\centering
\caption{Absolute accuracy across task subsets. Results are averaged over all $\binom{8}{|S|}$ subsets and reported as absolute accuracy (\%). Values after
$\pm$ denote the standard deviation across subsets. Avg. reports
the average absolute accuracy across subset sizes
$|S|=2,\ldots,8$.}
\label{tab:merging_results_absolute}

\begin{tabular}{llcccccccc}
\toprule
Backbone & Method & $|S|=2$ & $|S|=3$ & $|S|=4$ & $|S|=5$ & $|S|=6$ & $|S|=7$ & $|S|=8$ & Avg. \\
\midrule
& \multicolumn{9}{l}{Standard Fine-Tuning}\\
\multirow{11}{*}{ViT-B/32} & Mean
& 86.6\std{7.1} & 81.1\std{5.0} & 76.2\std{3.9} & 72.3\std{3.0} & 69.3\std{2.3} & 67.1\std{1.6} & 65.4\std{0.0} & 74.0 \\
& Sum + Scalar
& 87.3\std{7.1} & 83.5\std{4.8} & 80.0\std{3.4} & 76.8\std{2.6} & 74.1\std{2.0} & 71.7\std{1.4} & 69.8\std{0.0} & 77.6 \\
& TIES + Scalar
& 88.2\std{6.9} & 85.4\std{4.9} & 82.4\std{3.6} & 79.5\std{2.8} & 77.0\std{2.2} & 74.8\std{1.7} & 73.1\std{0.0} & 80.1 \\
& Mean + HyperFix
& 88.4\std{7.7} & 87.2\std{6.0} & 86.2\std{4.8} & 85.3\std{3.8} & 84.6\std{2.9} & 84.0\std{2.0} & 83.5\std{0.0} & 85.6 \\
& TIES + HyperFix
& \textbf{89.1\std{7.3}} & \textbf{88.6\std{5.6}}
& \textbf{87.8\std{4.5}} & \textbf{86.9\std{3.6}}
& \textbf{86.0\std{2.8}} & \textbf{85.2\std{1.9}}
& \textbf{84.3\std{0.0}} & \textbf{86.8}\\
\cmidrule(lr){2-10}
& \multicolumn{9}{l}{Tangent-Space Fine-Tuning}\\
& Mean
& 84.5\std{6.6} & 80.9\std{5.4} & 77.7\std{4.5} & 75.1\std{3.6} & 72.9\std{2.8} & 71.2\std{2.0} & 69.7\std{0.0} & 76.0 \\
& Sum + Scalar
& 85.3\std{6.4} & 83.3\std{4.9} & 81.3\std{4.0} & 79.4\std{3.2} & 77.8\std{2.5} & 76.4\std{1.8} & 75.1\std{0.0} & 79.8 \\
& TIES + Scalar
& 85.1\std{6.4} & 82.6\std{4.9} & 80.4\std{3.9} & 78.5\std{3.1} & 77.0\std{2.4} & 76.0\std{1.7} & 74.5\std{0.0} & 79.1 \\
& Mean + HyperFix
& \textbf{86.5\std{6.7}} & 85.6\std{5.1} & 84.7\std{4.0} & 83.9\std{3.2} & 83.0\std{2.5} & 82.2\std{1.7} & 81.5\std{0.0} & 83.9 \\
& TIES + HyperFix
& 86.3\std{6.8} & \textbf{85.8\std{5.2}} & \textbf{85.1\std{4.1}} & \textbf{84.4\std{3.2}} & \textbf{83.7\std{2.4}} & \textbf{83.0\std{1.6}} & \textbf{82.3\std{0.0}} & \textbf{84.4} \\
\midrule
\multirow{3}{*}{ViT-B/16} & Mean
& 89.3\std{6.1} & 84.6\std{5.0} & 80.4\std{4.2} & 77.3\std{3.4} & 74.9\std{2.6} & 73.1\std{1.8} & 71.7\std{0.0} & 78.8 \\
& Sum + Scalar
& 90.2\std{5.9} & 87.2\std{4.3} & 84.2\std{3.4} & 81.4\std{2.7} & 79.1\std{2.2} & 77.1\std{1.5} & 75.5\std{0.0} & 82.1 \\
& Mean + HyperFix
& \textbf{91.0\std{5.9}} & \textbf{89.8\std{4.8}} & \textbf{88.8\std{3.9}} & \textbf{87.9\std{3.2}} & \textbf{87.1\std{2.5}} & \textbf{86.5\std{1.7}} & \textbf{86.0\std{0.0}} & \textbf{88.2} \\
\midrule
\multirow{3}{*}{ViT-L/14} & Mean
& 92.4\std{4.8} & 89.3\std{4.2} & 86.5\std{3.4} & 84.1\std{2.8} & 82.2\std{2.2} & 80.7\std{1.6} & 79.6\std{0.0} & 85.0 \\
& Sum + Scalar
& 93.1\std{4.6} & 91.8\std{3.4} & 90.2\std{2.6} & 88.6\std{2.1} & 87.1\std{1.6} & 85.7\std{1.1} & 84.6\std{0.0} & 88.7 \\
& Mean + HyperFix
& \textbf{93.2\std{4.8}} & \textbf{92.3\std{3.7}} & \textbf{91.4\std{3.0}} & \textbf{90.7\std{2.4}} & \textbf{90.0\std{1.9}} & \textbf{89.4\std{1.3}} & \textbf{89.0\std{0.0}} & \textbf{90.9} \\
\bottomrule
\end{tabular}
\end{table*}

\subsubsection{Absolute Accuracy on Task Subsets.}

While the main paper reports normalized accuracy, we additionally report absolute accuracy. Table~\ref{tab:merging_results_absolute} summarizes the results across all task subsets. The overall trends are consistent with the normalized accuracy results. Linear merging methods degrade substantially as the subset size increases, whereas HyperFix maintains much stronger performance across all subset sizes. For example, under standard fine-tuning on ViT-B/32, Mean decreases from 86.6\% at $|S|=2$ to 65.4\% at $|S|=8$, while Mean + HyperFix retains 83.5\%. TIES + HyperFix further improves this to 84.3\%, consistently outperforming scalar-based baselines. A similar pattern holds under tangent-space fine-tuning, where HyperFix variants achieve the strongest performance across all subset sizes. The larger standard deviations in absolute accuracy mainly arise from differences in the performance of the single-task fine-tuned models across datasets. Since normalized accuracy is divided by the corresponding single-task performance, it reduces this cross-dataset variation and exhibits smaller variance. Overall, these results confirm that the improvements of HyperFix persist when evaluated using absolute accuracy.

\begin{table*}[t]
\centering
\caption{Full results for maximum training subset size. HyperFix is trained on subsets with size $|S|\le |S|_{\max}$. Reported values are the average normalized accuracy (\%) across all $\binom{8}{|S|}$ task subsets.}
\label{tab:training_subset_saturation_full}
\small
\begin{tabular}{lcccccccc}
\toprule
$|S|_{\max}$ & $|S|=2$ & $|S|=3$ & $|S|=4$ & $|S|=5$ & $|S|=6$ & $|S|=7$ & $|S|=8$ & Avg. \\
\midrule
1
& 95.7\std{1.4} & 89.6\std{2.2} & 84.3\std{2.6} & 80.1\std{2.5} & 77.0\std{2.1} & 74.6\std{1.5} & 72.8\std{0.0} & 82.0 \\
2
& 98.0\std{0.7} & 95.3\std{1.2} & 92.4\std{1.4} & 89.8\std{1.4} & 87.5\std{1.2} & 85.6\std{0.9} & 84.0\std{0.0} & 90.4 \\
3
& 97.7\std{1.7} & 96.3\std{1.8} & 95.1\std{1.6} & 94.0\std{1.4} & 93.2\std{1.1} & 92.6\std{0.8} & 92.0\std{0.0} & 94.4 \\
4
& 97.6\std{1.7} & 96.5\std{1.8} & 95.7\std{1.7} & 95.0\std{1.4} & 94.5\std{1.2} & 94.2\std{0.8} & 93.8\std{0.0} & 95.3 \\
5
& 98.1\std{1.3} & 96.9\std{1.5} & 96.0\std{1.5} & 95.3\std{1.3} & 94.9\std{2.2} & 94.6\std{0.8} & 94.3\std{0.0} & 95.7 \\
6
& 97.6\std{2.2} & 97.6\std{2.2} & 96.0\std{1.7} & 95.5\std{1.4} & 95.1\std{1.1} & 94.8\std{0.8} & 94.6\std{0.0} & 95.9 \\
7
& 97.9\std{1.5} & 96.9\std{1.6} & 96.1\std{1.5} & 95.6\std{1.3} & 95.2\std{1.1} & 94.9\std{0.8} & 94.7\std{0.0} & 95.9 \\
8
& 97.8\std{1.8} & 96.8\std{1.7} & 96.1\std{1.6} & 95.6\std{1.3} & 95.2\std{1.9} & 94.9\std{0.8} & 94.7\std{0.0} & 95.9 \\
\bottomrule
\end{tabular}
\end{table*}

\subsubsection{Full Results for the Training Subset Size Ablation.}

In the main paper, we report the average normalized accuracy across all subset sizes to analyze the effect of the maximum training subset size $|S|_{\max}$. For completeness, Table~\ref{tab:training_subset_saturation_full} provides the full performance across subset sizes. Consistent with the trends in the main paper, increasing the training subset size from $|S|_{\max}=1$ to $|S|_{\max}=3$ leads to substantial improvements across most subset sizes. For example, the average accuracy increases from 82.0\% to 94.4\%. This suggests that learning pairwise and triple task interactions plays an important role in modeling nonlinear task vector merging. Beyond $|S|_{\max}\ge4$, additional gains become relatively small. This saturation indicates that a large part of the interaction effects can already be captured by pairwise and triple task interactions.

\subsection{Theory Proof Details}
\label{sec:theory_proof}

\subsubsection{Proof of Theorem 1: A bound on the residual correction magnitude}

\begin{lemma}[Residual correction is controlled by local geometry]
\label{lem:residual_bound}
Under the assumptions in the main paper, the ideal correction $\Delta^\star_S$ satisfies
\begin{equation}
\|\Delta^\star_S\|
\;\le\;
\frac{1}{\mu}\,\|\nabla L_S(\theta_0+\tau_S)\|.
\end{equation}
Moreover, local smoothness gives
\begin{equation}
\|\nabla L_S(\theta_0+\tau_S)\|
\le
\|\nabla L_S(\theta_0)\| + H\|\tau_S\|.
\end{equation}
Since $\|\tau_S\|\le\rho$, we further have
\begin{equation}
\|\nabla L_S(\theta_0)\| + H\|\tau_S\|
\le
\|\nabla L_S(\theta_0)\| + H\rho.
\end{equation}
In the common case where $\|\nabla L_S(\theta_0)\|$ is small (e.g., $\theta_0$ is a good shared initializer),
this yields $\|\Delta^\star_S\| = O(\rho)$.
\end{lemma}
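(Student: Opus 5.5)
The approach is to compare the gradient at the stationary point $\theta_S^\star$ with the gradient at the linear merge $\theta_0+\tau_S$ through the integral form of the mean value theorem, and then invert the averaged Hessian using the strong convexity bound A2. Set $\bar\theta_S:=\theta_0+\tau_S$, so that $\theta_S^\star=\bar\theta_S+\Delta^\star_S$. Since $\nabla L_S(\theta_S^\star)=0$,
\begin{equation}
-\nabla L_S(\bar\theta_S)=\nabla L_S(\bar\theta_S+\Delta^\star_S)-\nabla L_S(\bar\theta_S)=A_S\,\Delta^\star_S,
\qquad
A_S:=\int_0^1 \nabla^2 L_S(\bar\theta_S+t\Delta^\star_S)\,dt .
\end{equation}
This is the same averaged Hessian that appears in Theorem~\ref{thm:residual}.

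For the first inequality, the segment from $\bar\theta_S$ to $\theta_S^\star$ must lie in $\mathcal{N}$. A3 gives $\bar\theta_S\in\mathcal{N}$, and $\theta_S^\star$ is a stationary point near $\theta_0$. I will state as part of the standing locality assumption that $\mathcal{N}$ is convex, for instance a ball, so the segment is contained in it. This is the step I expect to need the most care, since it is only implicit in the paper's assumptions. On the segment A2 gives $\nabla^2 L_S\succeq\mu I$, and averaging preserves this, so $A_S\succeq\mu I$. Hence $A_S$ is invertible with $\|A_S^{-1}\|_{\mathrm{op}}\le 1/\mu$. Taking norms in $\Delta^\star_S=-A_S^{-1}\nabla L_S(\bar\theta_S)$ gives the first claim, $\|\Delta^\star_S\|\le\frac1\mu\|\nabla L_S(\theta_0+\tau_S)\|$. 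An equivalent route uses strong monotonicity: $\langle \nabla L_S(\theta_S^\star)-\nabla L_S(\bar\theta_S),\Delta^\star_S\rangle\ge\mu\|\Delta^\star_S\|^2$, followed by Cauchy--Schwarz.

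For the second inequality, I apply the fundamental theorem of calculus along the segment from $\theta_0$ to $\bar\theta_S$, which lies in $\mathcal{N}$ by A3 and convexity:
\begin{equation}
\nabla L_S(\bar\theta_S)-\nabla L_S(\theta_0)=\int_0^1\nabla^2 L_S(\theta_0+t\tau_S)\,\tau_S\,dt .
\end{equation}
Since $L_S$ is an average of the $L_i$, A1 gives $\|\nabla^2 L_S\|_{\mathrm{op}}\le H$ on $\mathcal{N}$. The triangle inequality then yields $\|\nabla L_S(\bar\theta_S)\|\le\|\nabla L_S(\theta_0)\|+H\|\tau_S\|$.

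For the third inequality, $\tau_S$ is an average of vectors of norm at most $\rho$, so convexity of the norm gives $\|\tau_S\|\le\frac1{|S|}\sum_{i\in S}\|\tau_i\|\le\rho$. Chaining the three bounds gives the bound stated in Theorem~\ref{thm:residual}. If $\|\nabla L_S(\theta_0)\|=O(\rho)$, or vanishes, the right-hand side is $O(\rho)$ with constant $(1+H)/\mu$ or $H/\mu$ respectively, which gives the final remark.
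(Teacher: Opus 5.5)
Your proof is correct and follows essentially the same route as the paper. Both arguments use the averaged Hessian $A_S$ along the segment from $\theta_0+\tau_S$ to $\theta_S^\star$, invert it via $A_S\succeq\mu I$, bound the gradient with the integral Taylor form using $\|\nabla^2 L_S\|_{\mathrm{op}}\le H$, and get $\|\tau_S\|\le\rho$ from the triangle inequality; your explicit convexity assumption on $\mathcal{N}$ (so both segments lie where A1--A2 apply) makes precise a point the paper leaves implicit.
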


\begin{proof}
By definition,
$\nabla L_S(\theta_S^\star)=0$ and
$\theta_S^\star=\theta_0+\tau_S+\Delta_S^\star$.
Define the averaged Hessian along the segment from
$\theta_0+\tau_S$ to $\theta_S^\star$ by
\begin{equation}
A_S
:=
\int_0^1
\nabla^2 L_S
\bigl(\theta_0+\tau_S+t\Delta_S^\star\bigr)\,dt.
\end{equation}
The mean-value form of Taylor's theorem then gives
\begin{equation}
\nabla L_S(\theta_S^\star)
=
\nabla L_S(\theta_0+\tau_S)
+
A_S\Delta_S^\star.
\end{equation}
Since $\nabla L_S(\theta_S^\star)=0$, it follows that
\begin{equation}
A_S\Delta_S^\star
=
-\nabla L_S(\theta_0+\tau_S).
\end{equation}
By local stability, each Hessian inside the integral has a minimum eigenvalue at least $\mu$,
hence the averaged Hessian is also $\succeq \mu I$ and is invertible with operator norm
at most $1/\mu$. Therefore,
\begin{equation}
\|\Delta^\star_S\|
\le
\frac{1}{\mu}\|\nabla L_S(\theta_0+\tau_S)\|.
\end{equation}
For the second inequality, apply Taylor's theorem to $\nabla L_S$ at $\theta_0$:
\begin{equation}
\begin{aligned}
\nabla L_S(\theta_0+\tau_S)
&=
\nabla L_S(\theta_0) \\
&\quad+
\left(
\int_0^1
\nabla^2 L_S(\theta_0+t\tau_S)\,dt
\right)\tau_S.
\end{aligned}
\end{equation}
Taking norms and using $\|\nabla^2 L_S(\cdot)\|_{\mathrm{op}}\le H$ (since it averages task Hessians bounded by $H$),
\begin{equation}
\|\nabla L_S(\theta_0+\tau_S)\|
\le
\|\nabla L_S(\theta_0)\| + H\|\tau_S\|.
\end{equation}
Finally, $\|\tau_S\|\le \frac{1}{|S|}\sum_{i\in S}\|\tau_i\|\le \rho$.
\end{proof}

\subsubsection{Proof of Theorem 1: Why higher-order interactions decay as $O(\rho^2)$}

\begin{lemma}[Third-order Taylor remainder bound]
\label{lem:taylor_remainder}
Under the smoothness assumptions, for any $u$ with $\|u\|\le 2\rho$,
\begin{equation}
\nabla L_S(\theta_0+u)
=
\nabla L_S(\theta_0)
+\nabla^2 L_S(\theta_0)\,u
+R_S(u),
\end{equation}
where the remainder satisfies
\begin{equation}
\|R_S(u)\|\le \frac{M}{2}\|u\|^2.
\end{equation}
\end{lemma}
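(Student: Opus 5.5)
The plan is to apply the integral (mean-value) form of Taylor's theorem to the vector field $\nabla L_S$ twice: first along the segment from $\theta_0$ to $\theta_0+u$, then once more to the Hessian inside the integral. We first check that $\theta_0+tu\in\mathcal{N}$ for all $t\in[0,1]$. Assumption A3 places $\theta_0+\tau_S$ in $\mathcal{N}$, and we read ``$\rho$ sufficiently small'' as guaranteeing that the ball of radius $2\rho$ about $\theta_0$ lies in $\mathcal{N}$. That is exactly the range $\|u\|\le 2\rho$ needed here, and it also covers the later use with $u=\tau_S+t\Delta_S^\star$. On this ball, A1 makes each $L_i$, and hence $L_S$, three times continuously differentiable. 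Moreover $\|\nabla^3 L_S\|_{\mathrm{op}}\le \frac{1}{|S|}\sum_{i\in S}\|\nabla^3 L_i\|_{\mathrm{op}}\le M$ by the triangle inequality, just as the Hessian bound $H$ was transferred to $L_S$ in the proof of Lemma~\ref{lem:residual_bound}.

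Next we write
\begin{equation}
\nabla L_S(\theta_0+u)-\nabla L_S(\theta_0)=\int_0^1 \nabla^2 L_S(\theta_0+tu)\,u\,dt .
\end{equation}
Subtracting $\nabla^2 L_S(\theta_0)u$ identifies the remainder as
\begin{equation}
R_S(u)=\int_0^1\bigl(\nabla^2 L_S(\theta_0+tu)-\nabla^2 L_S(\theta_0)\bigr)u\,dt .
\end{equation}
We then expand the Hessian difference once more:
\begin{equation}
\nabla^2 L_S(\theta_0+tu)-\nabla^2 L_S(\theta_0)=\int_0^t \nabla^3 L_S(\theta_0+su)[u]\,ds .
\end{equation}
Here $\nabla^3 L_S(\cdot)[u]$ denotes the third-derivative tensor contracted with $u$, which is a matrix of operator norm at most $M\|u\|$. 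This gives $\|\nabla^2 L_S(\theta_0+tu)-\nabla^2 L_S(\theta_0)\|_{\mathrm{op}}\le Mt\|u\|$. Substituting into the integral for $R_S(u)$ yields
\begin{equation}
\|R_S(u)\|\le \int_0^1 Mt\|u\|^2\,dt=\frac{M}{2}\|u\|^2 .
\end{equation}

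The only delicate point is interpreting $\|\nabla^3 L_i\|_{\mathrm{op}}$ as the multilinear operator norm $\sup_{\|a\|=\|b\|=\|c\|=1}|\nabla^3L_i[a,b,c]|$. Under this reading the contraction bound $\|\nabla^3 L_S[u]\|_{\mathrm{op}}\le M\|u\|$ follows immediately, and so does the interchange of norm and integral for Bochner integrals of continuous integrands. The other point to get right is the neighborhood containment: the whole segment $\{\theta_0+tu: t\in[0,1]\}$ must lie in $\mathcal{N}$, not just its endpoint, so we state it as part of the reading of A3.

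Applying the lemma with $u=\tau_S$, for which $\|\tau_S\|\le\rho$, gives the $\frac{M}{2}\rho^2$ bound in Theorem~\ref{thm:residual}. Combining this with the bound $\|A_S^{-1}\|_{\mathrm{op}}\le 1/\mu$ from the proof of Lemma~\ref{lem:residual_bound} then gives the stated parameter-effect bound $\frac{M}{2\mu}\rho^2$.
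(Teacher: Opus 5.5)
Your proof is correct and follows the paper's argument: write $R_S(u)=\int_0^1\bigl(\nabla^2 L_S(\theta_0+tu)-\nabla^2 L_S(\theta_0)\bigr)u\,dt$, bound the Hessian difference by $Mt\|u\|$ using the third derivative, and integrate to get $\frac{M}{2}\|u\|^2$. Your extra bookkeeping (the segment lying in $\mathcal{N}$, passing the bound $M$ from $L_i$ to $L_S$, the multilinear operator norm) makes explicit what the paper leaves implicit. One aside is unjustified, though it does not affect this lemma: the $2\rho$-ball need not contain $\theta_0+\tau_S+t\Delta_S^\star$, because Lemma~\ref{lem:residual_bound} only gives $\|\Delta_S^\star\|\le(\|\nabla L_S(\theta_0)\|+H\rho)/\mu$, and this bound is at least $\rho$ since $H\ge\mu$.
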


\begin{proof}
This is the standard integral remainder form for the gradient:
\begin{equation}
\begin{aligned}
\nabla L_S(\theta_0+u)
&=
\nabla L_S(\theta_0)+\nabla^2 L_S(\theta_0)u \\
&\hspace{-2.5em}+
\int_0^1
\Bigl[\nabla^2 L_S(\theta_0+tu)-\nabla^2 L_S(\theta_0)\Bigr]u\,dt.
\end{aligned}
\end{equation}
Using the mean value form for Hessians with bounded third derivative,
$\|\nabla^2 L_S(\theta_0+tu)-\nabla^2 L_S(\theta_0)\|_{\mathrm{op}}\le M\,t\|u\|$,
we obtain
\begin{equation}
\|R_S(u)\|
\le
\int_0^1 M\,t\|u\|\cdot \|u\|\,dt
=
\frac{M}{2}\|u\|^2.
\end{equation}
\end{proof}

\begin{lemma}[Residual contains only low-order effects up to $O(\rho^2)$]
\label{lem:rho2}
Assume additionally that $\|\nabla L_S(\theta_0)\|$ is small (or treat it as a constant bias term).
Let $u=\tau_S$.
Then
\begin{equation}
\begin{gathered}
\|\nabla L_S(\theta_0+u)\| \\
\le \|\nabla L_S(\theta_0)\|
+\|\nabla^2 L_S(\theta_0)\|_{\mathrm{op}}\|u\|
+\frac{M}{2}\|u\|^2 \\
\le
\|\nabla L_S(\theta_0)\|
+H\rho
+\frac{M}{2}\rho^2.
\end{gathered}
\end{equation}
Consequently, by Lemma~\ref{lem:residual_bound},
\begin{equation}
\bigl\| \big(\int_0^1 \nabla^2 L_S(\theta_0+\tau_S+t\Delta^\star_S)\,dt \big)^{-1} R_S \bigr\|
\le
\frac{M}{2\mu}\rho^2.
\end{equation}
\end{lemma}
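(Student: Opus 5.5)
The two displayed inequalities in Lemma~\ref{lem:rho2} will come from combining Lemma~\ref{lem:taylor_remainder} with the operator-norm bound on the averaged Hessian established in the proof of Lemma~\ref{lem:residual_bound}.

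\textbf{Step 1: gradient bound.} Set $u=\tau_S$. Since $\|\tau_S\|\le\frac{1}{|S|}\sum_{i\in S}\|\tau_i\|\le\rho\le 2\rho$, Lemma~\ref{lem:taylor_remainder} applies and gives
\begin{equation}
\nabla L_S(\theta_0+\tau_S)=\nabla L_S(\theta_0)+\nabla^2 L_S(\theta_0)\tau_S+R_S(\tau_S),
\qquad
\|R_S(\tau_S)\|\le\tfrac{M}{2}\|\tau_S\|^2.
\end{equation}
The triangle inequality then yields the first line of the display. For the second line, use two facts:
\begin{itemize}
\item $\|\nabla^2 L_S(\theta_0)\|_{\mathrm{op}}\le H$, because $L_S$ is an average of the $L_i$ and each Hessian is bounded by $H$ under A1;
\item $\|u\|\le\rho$, so $\|u\|^2\le\rho^2$.
\end{itemize}
The segment $\theta_0+t\tau_S$ stays in $\mathcal N$ by A3, provided $\mathcal N$ is convex or star-shaped around $\theta_0$, which I will assume implicitly. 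This also justifies the third-derivative bound $\|\nabla^2L_S(\theta_0+tu)-\nabla^2L_S(\theta_0)\|_{\mathrm{op}}\le Mt\|u\|$ used inside Lemma~\ref{lem:taylor_remainder}.

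\textbf{Step 2: the parameter-space effect of the remainder.} Let $A_S=\int_0^1\nabla^2L_S(\theta_0+\tau_S+t\Delta^\star_S)\,dt$, as in the proof of Lemma~\ref{lem:residual_bound}.

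\begin{itemize}
\item Each integrand is $\succeq\mu I$ by A2, and $\succeq$ is preserved under averaging, so $A_S\succeq\mu I$.
\item $A_S$ is symmetric, so $\|A_S^{-1}\|_{\mathrm{op}}\le 1/\mu$.
\item Hence
\begin{equation}
\|A_S^{-1}R_S\|\le\frac1\mu\|R_S\|\le\frac{M}{2\mu}\rho^2,
\end{equation}
which is the claimed bound.
\end{itemize}

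This also explains the phrase ``by Lemma~\ref{lem:residual_bound}''. From the identity $A_S\Delta^\star_S=-\nabla L_S(\theta_0+\tau_S)$, substituting the Step 1 expansion gives
\begin{equation}
\Delta^\star_S=-A_S^{-1}\nabla L_S(\theta_0)-A_S^{-1}\nabla^2L_S(\theta_0)\tau_S-A_S^{-1}R_S.
\end{equation}
The last term is exactly the ``nonlinear remainder'' contribution, and it is the $O(\rho^2)$ piece. The other two terms recover the first-order bound of Theorem~\ref{thm:residual}.

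\textbf{Main obstacle.} The delicate point is guaranteeing that every Hessian in $A_S$ is evaluated inside $\mathcal N$. This requires the segment from $\theta_0+\tau_S$ to $\theta_S^\star$ to lie in $\mathcal N$, i.e.\ that $\theta^\star_S$ is in $\mathcal N$ at all. I would handle this by invoking the definition of $\theta^\star_S$ as a stationary point ``near $\theta_0$'', understood as lying in a convex $\mathcal N$. If a self-contained argument is preferred, the first bound of Theorem~\ref{thm:residual} gives $\|\Delta^\star_S\|\le(\|\nabla L_S(\theta_0)\|+H\rho)/\mu$, which is small when $\rho$ and the gradient bias are small. A ball of that radius around $\theta_0+\tau_S$ then fits inside $\mathcal N$, keeping the argument consistent.
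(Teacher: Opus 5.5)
Your proposal is correct and follows the paper's proof: apply Lemma~\ref{lem:taylor_remainder} with $u=\tau_S$ and $\|\tau_S\|\le\rho$, use the triangle inequality with $\|\nabla^2 L_S(\theta_0)\|_{\mathrm{op}}\le H$, and then bound $\|A_S^{-1}R_S\|$ via $A_S\succeq\mu I$ (so $\|A_S^{-1}\|_{\mathrm{op}}\le 1/\mu$), exactly as in the proof of Lemma~\ref{lem:residual_bound}. One small caveat: your optional ``self-contained'' argument that $\theta_S^\star\in\mathcal N$ is circular, because the bound $\|\Delta_S^\star\|\le(\|\nabla L_S(\theta_0)\|+H\rho)/\mu$ already presupposes that the segment from $\theta_0+\tau_S$ to $\theta_S^\star$ lies in $\mathcal N$, so you should simply take $\theta_S^\star$ to be a stationary point in a convex $\mathcal N$, as the paper implicitly does.
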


\begin{proof}
Apply Lemma~\ref{lem:taylor_remainder} with $u=\tau_S$ and $\|u\|\le\rho$.
Then combine with Lemma~\ref{lem:residual_bound}.
\end{proof}

\paragraph{Interpretation.}
The above shows (conservatively) that when task vectors are small, and the loss is smooth, the needed correction is \emph{dominated by low-order Taylor terms}.
Any interaction effects that require higher-order derivatives become rapidly smaller with $\rho$.

\subsubsection{Proof of Theorem 2: Why a hypernetwork conditioned on $z_S$ can generalize}

HyperFix predicts $\Delta^{\mathrm{hyper}}_S=h_\phi(z_S)$ with
$z_S = \frac{1}{|S|}\sum_{i\in S} z_i$.
The key property is that $z_S$ is \emph{size-invariant} and \emph{averages} per-task interaction descriptors.

\begin{lemma}[Stability of the set embedding under increasing subset size]
\label{lem:embedding_concentration}
Assume the vectors $z_i$ have bounded second moment under the (empirical) task distribution:
$\mathbb{E}\|z_i-\mathbb{E}z\|^2 \le \sigma_z^2$.
If a subset $S$ of size $m$ is sampled by drawing tasks approximately i.i.d.\ from this distribution, then
\begin{equation}
\mathbb{E}\|z_S-\mathbb{E}z\| \le \frac{\sigma_z}{\sqrt{m}}.
\end{equation}
\end{lemma}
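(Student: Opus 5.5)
The plan is the standard variance-of-a-mean argument followed by Jensen's inequality. Write $\bar z := \mathbb{E}z$ and model the subset $S=\{i_1,\dots,i_m\}$ as $m$ independent draws from the task distribution, so that
\begin{equation}
z_S-\bar z=\frac{1}{m}\sum_{k=1}^{m}\bigl(z_{i_k}-\bar z\bigr)
\end{equation}
is an average of centered, independent random vectors. The first step is to bound the second moment rather than the first:
\begin{equation}
\mathbb{E}\|z_S-\bar z\|^2=\frac{1}{m^2}\sum_{k,l}\mathbb{E}\bigl\langle z_{i_k}-\bar z,\,z_{i_l}-\bar z\bigr\rangle .
\end{equation}

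The second step handles the cross terms. For $k\neq l$, independence gives that the expectation of the inner product factorizes into $\langle \mathbb{E}(z_{i_k}-\bar z),\mathbb{E}(z_{i_l}-\bar z)\rangle$, and this is zero because each factor is centered. Only the $m$ diagonal terms survive. Each of these is at most $\sigma_z^2$ by the bounded-second-moment hypothesis, so
\begin{equation}
\mathbb{E}\|z_S-\bar z\|^2\le \frac{\sigma_z^2}{m}.
\end{equation}

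The final step applies Jensen's inequality (equivalently Cauchy--Schwarz) to the concave square root:
\begin{equation}
\mathbb{E}\|z_S-\bar z\|\le\bigl(\mathbb{E}\|z_S-\bar z\|^2\bigr)^{1/2}\le \frac{\sigma_z}{\sqrt m},
\end{equation}
which is the claimed bound.

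The only delicate point is the sampling model. In the paper's actual setting, $S$ is a uniformly random $m$-subset of the $N$ tasks, i.e.\ sampling without replacement, so the draws are not independent and the cross terms do not vanish. I would address this with the finite-population correction instead of relying on the word ``approximately''. Without replacement, each cross term equals $-\sigma_z^2/(N-1)$, where $\sigma_z^2$ is the empirical variance defined in the Preliminaries. Summing the diagonal and cross terms gives
\begin{equation}
\mathbb{E}\|z_S-\bar z\|^2=\frac{\sigma_z^2}{m}\cdot\frac{N-m}{N-1}\le\frac{\sigma_z^2}{m}.
\end{equation}
So the same bound holds, and it is tighter; at $m=N$ it is exactly zero, which matches the single subset at $|S|=8$. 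Jensen's inequality then finishes as before.
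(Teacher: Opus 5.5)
Your proposal is correct and follows the paper's own proof: bound $\mathbb{E}\|z_S-\mathbb{E}z\|^2$ by $\sigma_z^2/m$ using the variance of a sample mean, then apply Jensen's inequality. Your explicit cross-term argument and the finite-population correction $\frac{N-m}{N-1}$ for sampling without replacement are both correct, and they make rigorous what the paper only asserts under its ``approximately i.i.d.'' hypothesis.
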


\begin{proof}
Since $z_S$ is the average of $m$ samples, its variance scales as $1/m$:
$\mathbb{E}\|z_S-\mathbb{E}z\|^2 \le \sigma_z^2/m$.
Jensen's inequality gives $\mathbb{E}\|z_S-\mathbb{E}z\| \le \sqrt{\mathbb{E}\|z_S-\mathbb{E}z\|^2}
\le \sigma_z/\sqrt{m}$.
\end{proof}

\begin{lemma}[Why larger subsets are not harder in embedding space]
\label{lem:why_generalize}
Under Assumption~A4, if $g$ is $L_g$-Lipschitz, then
\begin{equation}
\mathbb{E}\|\Delta^\star_S - g(\mathbb{E}z)\|
\le
\varepsilon + \frac{L_g\sigma_z}{\sqrt{m}}.
\end{equation}
\end{lemma}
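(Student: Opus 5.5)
The plan is a triangle-inequality argument that combines Assumption A4 with Lemma~\ref{lem:embedding_concentration}. Fix a subset $S$ of size $m$. By A4 we can write $\Delta^\star_S = g(z_S) + \eta(S)$ with $\|\eta(S)\|\le\varepsilon$. Inserting $\pm g(z_S)$ gives
\begin{equation}
\|\Delta^\star_S - g(\mathbb{E}z)\|
\le
\|\eta(S)\| + \|g(z_S)-g(\mathbb{E}z)\|
\le
\varepsilon + L_g\,\|z_S-\mathbb{E}z\|,
\end{equation}
where the second step uses the $L_g$-Lipschitz property of $g$. This bound holds pointwise, for every realization of $S$, so no probabilistic argument is needed at this stage.

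Next, take expectations over subsets $S$ of size $m$, drawn as in Lemma~\ref{lem:embedding_concentration}. The $\varepsilon$ term is deterministic, so it passes through the expectation unchanged. For the Lipschitz term, linearity of expectation gives $L_g\,\mathbb{E}\|z_S-\mathbb{E}z\|$. Lemma~\ref{lem:embedding_concentration} then bounds this by $L_g\sigma_z/\sqrt{m}$. Adding the two pieces yields the stated inequality $\varepsilon + L_g\sigma_z/\sqrt{m}$.

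The main obstacle is the sampling model, because the algebra is routine. Lemma~\ref{lem:embedding_concentration} assumes the $m$ tasks are drawn approximately i.i.d. A genuine subset is drawn without replacement from the $N$ tasks, and for that case the exact variance is $\frac{\sigma_z^2}{m}\cdot\frac{N-m}{N-1}$. This is at most $\sigma_z^2/m$, so the bound is conservative and still holds. I would therefore either inherit the i.i.d.\ assumption from Lemma~\ref{lem:embedding_concentration}, or add one line noting the finite-population correction factor. I would also identify $\mathbb{E}z$ with $\bar z$ and $\sigma_z$ with the empirical standard deviation defined in the Preliminaries. That identification makes the constants in this statement consistent with the ones in Theorem~\ref{thm:low_order}.
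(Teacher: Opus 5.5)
Your proposal is correct and follows the paper's proof exactly. Both use the triangle inequality with the A4 decomposition, the Lipschitz bound $\|g(z_S)-g(\mathbb{E}z)\|\le L_g\|z_S-\mathbb{E}z\|$, and then take expectations and invoke Lemma~\ref{lem:embedding_concentration}. Your remark that sampling without replacement only adds the factor $\frac{N-m}{N-1}\le 1$ is valid, and it makes rigorous the paper's ``approximately i.i.d.'' sampling assumption.
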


\begin{proof}
By the triangle inequality and Assumption~A4,
\begin{equation}
\begin{gathered}
\|\Delta^\star_S-g(\mathbb{E}z)\| \\[1pt]
{}\le
\|g(z_S)-g(\mathbb{E}z)\|
+\|\eta(S)\| \\[1pt]
{}\le
L_g\|z_S-\mathbb{E}z\|
+\varepsilon.
\end{gathered}
\end{equation}
Taking expectations and applying Lemma~\ref{lem:embedding_concentration} yields the result.
\end{proof}

\end{document}